\DeclareMathOperator*{\argmax}{argmax}
\DeclareMathOperator*{\expec}{\mathbb E}
\newcommand{\indic}{{\mathbbm 1}}
\newcommand{\chosen}{{\texttt{Chosen}}}
\newcommand{\actual}{{\texttt{Actual}}}
\newcommand{\comply}{{\texttt{Comply}}}
\newcommand{\x}{{\mathbf x}}
\newcommand{\y}{{\mathbf y}}
\newcommand{\wt}{{\mathbf w}}
\newcommand{\vt}{{\mathbf v}}
\newcommand{\bP}{{\mathbf P}}
\newcommand{\cA}{{\mathcal A}}
\newcommand{\cB}{{\mathcal B}}
\newcommand{\cC}{{\mathcal C}}
\newcommand{\cH}{{\mathcal H}}
\newcommand{\fN}{{\mathfrak N}}
\newcommand{\fC}{{\mathfrak C}}
\newcommand{\fA}{{\mathfrak A}}
\newcommand{\fD}{{\mathfrak D}}
\newcommand{\eod}{{${}$\\}}
\newcommand{\regret}{{\mathtt{Regret}}}
\newcommand*\loss{\ensuremath{\boldsymbol\ell}}
\newtheorem{thm}{Theorem}
\newtheorem{prop}{Proposition}
\newtheorem{lem}{Lemma}
\newtheorem{eg}{Example}
\newtheorem{defn}{Definition}
\theoremstyle{remark}
\title{ Compliance-Aware Bandits}
\author{
  Della Penna, Nicol\'as\\
  Australian National University\\
  \texttt{n@nikete.com}
  \and
  Reid, Mark D.\\
  Australian National University\\
  \texttt{mark.reid@anu.edu.au}
   \and
  Balduzzi, David\\
  Victoria University Wellington\\
  \texttt{david.balduzzi@vuw.ac.nz}
}
\begin{document} 
\maketitle
\begin{abstract}

Motivated by clinical trials, we study bandits with observable non-compliance. 
At each step, the learner chooses an arm, after, instead of observing only the reward, it also observes the action that took place.
We show that such noncompliance can be helpful or hurtful to the learner in general.
Unfortunately, naively incorporating compliance information into bandit algorithms loses guarantees on sublinear regret.
We present hybrid algorithms that maintain regret bounds up to a multiplicative factor and can incorporate compliance information.
Simulations based on real data from the International Stoke Trial show the practical potential of these algorithms.

\end{abstract} 

\section{Introduction}
\label{intro}

People often don't do as they are told. Approximately 50\% of patients suffering from chronic illness do not take prescribed medications \cite{sabate:03}. It is safe to assume that the rate at which patients or doctors will follow the recommendations provided by an algorithm will fall well short of 100\%. 
Unfortunately, despite its importance in medical applications \cite{vrijens:12,hugtenburg:13}, compliance has not been analyzed in the bandit literature. 

In this paper, we introduce compliance awareness into the \emph{bandit setting}. Bandit problems are concerned with optimal repeated decision-making in the presence of uncertainty \cite{robbins:52, lai:85, bubeck:12}. The main challenge is to trade-off exploration and exploitation, so as to collect enough samples to estimate the rewards from different strategies whilst also strongly biasing samples towards those actions most likely to yield high rewards.  

Our running example is an algorithm that recommends treatments to patients. For concreteness, consider a mobile app that encourages patients who have recently suffered a stroke to carry out various low intensity interventions that may be beneficial in preventing future strokes.
These could be as simple as meditating, going for a walk or taking an aspirin.
The effects of the interventions on the probability of a future stroke may be small. The social benefits of collectively choosing the most effective interventions, however, may however be large.
 
However, there are other settings in which compliance information is potentially available. For example, an algorithm could recommend treatments to \emph{doctors}. Whether or not the doctor then prescribes the recommended treatment to the patient is then extremely informative, since the doctor may make observations and have access to background knowledge that is not available to the algorithm.
A quite different setting is online advertising, where bandit algorithms are extensively applied to recommend which ad to display \cite{graepel:10,mcmahan:13}. In practice, the recommendations provided by the bandit may not be followed. For example, sales teams often have have hand-written rules that override the bandit in certain situations. 
Clearly, the bandit algorithm should be able to learn more efficiently if it is provided  with information about which ads were actually shown.

In the classic multi-armed bandit setting, the player chooses one of $k$-arms on each round and receives a reward \cite{auer:02b,auer:02}. The player is not told what the reward would have been had it chosen a different arm. The goal is to minimize the cumulative regret over a series of $T$ rounds. In the more general compliance setting, the action chosen by the algorithm is not necessarily the action that is finally carried out, see section~\ref{sec:formal}. Instead, a compliance process mediates between the algorithm's recommendation and the action that is actually taken. Importantly, the compliance process may depend on latent characteristics of the subject of the decision. We focus on the case where the outcome of the compliance process is observable.

Unfortunately, compliance information is a two-edged sword. There are settings where it is useful; but  it can also lead to linear regret. We develop bounded regret algorithms that incorporate compliance information.

\paragraph{Outline.}
Section~\ref{sec:noncompliance} introduces the formal compliance setting and introduces three protocols for incorporating compliance information into bandit algorithms. It turns out that each protocol has strengths and weaknesses. The simplest protocol ignores compliance information -- which yields the classical setting where standard regret bounds hold. If, instead of attending to its recommendations, the bandit attends to what whether the patient actually takes the treatment, then it is possible, in some scenarios, to learn faster than without compliance information. On the other hand, there are no guarantees on convergence when an algorithm attends purely to the compliance of patients and ignores its own prior recommendations -- examples of linear regret are provided in section~\ref{sec:protocols}. 

A natural goal is thus to simultaneously incorporate compliance information whilst preserving the no-regret guarantees of the classical setting. Section~\ref{sec:algol} presents two hybrid algorithms that do both. The first, \texttt{HierarchicalBandit} is in a two-level bandit algorithm. The bottom-level learns three experts that specialize on difference kinds of compliance information. The top-level is another bandit that learns which expert performs optimally. The algorithm thus has no-regret against both the treatments and two natural reward protocols that incorporate compliance information. The second algorithm, \texttt{ThompsonBounded}, rapidly converges to Thompson sampling with standard guarantees. However, when Thompson sampling is unsure about which arm to pull, the algorithm takes advantage of the uncertainty to introduce arm-pulls sampled from \texttt{HierchicalBandit}.

Empirically, \texttt{ThompsonBounded} achieves a surplus of 8.9 extra survivals (that is, human lives) relative to the randomized baseline.
The \texttt{HierarchicalBandit} algorithm with \texttt{Epsilon Greedy} as the base algorithm achieves a surplus of 9.2.
In contrast, the best performing strategy that is not compliance aware is Thompson sampling, which yields 7.9 extra survivals.

\paragraph{Comparison with other bandit settings.}
It is useful to compare noncompliance with other bandit settings. Partial monitoring and its generalizations, such as feedback graphs, are concerned with situations where the player only partial observes its loss \cite{alon:15}. Our setting is an extension of the bandit setting, where additional compliance-information is provided. Whether or not a patient complies is a form of side-information. However, in contrast to the side-information available to contextual bandits, it is only available \emph{after} an arm is pulled. An interesting question, left for future work, is how contextual and compliance information can both be incorporated into bandit algorithms.

Hybrid algorithms were previously proposed in the best-of-both-worlds scenario \cite{bubeck:12a,seldin:14}, where the goal is to construct a bandit that plays optimally in both stochastic and adversarial environments. Vapnik introduced a related notion of side-information into the supervised setting with his learning under privileged information framework \cite{vapnik:09}. Perhaps the closest setting to ours are the confounded bandits in \cite{bareinboim:15}, see section~\ref{sec:confounded}.

\section{Models of Noncompliance}
\label{sec:noncompliance}

This section introduces a formal setting for bandits with noncompliance and introduces protocols that prescribe how to make use of compliance information. Before diving into the formalism let us discuss, informally, how compliance information can be useful. 

First, suppose that the patient population is homogeneous in their response to the treatment, and that patients take the treatment with probability $p$ if prescribed and probability $1-p$ otherwise where $p<0.5$. In this setting, it is clear that a bandit algorithm will learn faster by rewarding arms according to whether the treatment was \emph{taken} by the patient, rather than whether it was \emph{recommended} to the patient. 

As a second example, consider \emph{corrective compliance} where patients who benefit from a treatment are more likely to take it, since they have access to information that the bandit does not. The bandit clearly benefits by learning from the information expressed in the behavior of the patients. Learning from the treatment actually taken is therefore more efficient than learning from the bandit's recommendations. Further examples are provided in section~\ref{sec:formal}.

\subsection{Unobserved confounders.}
\label{sec:confounded}


An important point of comparison is the bandits with unobserved confounders model introduced in \cite{bareinboim:15}. That paper was motivated using an extended example involving two subpopulations (drunk and sober) gambling in a casino. Since we are primarily interested in clinical applications, we map their example onto two subpopulations of patients, rich and poor. Suppose that rich patients always take the treatment (since they can afford it) and that they are also healthier in general. Poor patients only take the treatment when prescribed by a doctor.

Barenboim et al observe that the question ``what is the patient's expected reward when taking the treatment (formally: $\expec[R|{A=1}]$)?'' is confounded by the latent variable \texttt{wealth}. Estimating the effect of the treatment -- which may differ between poor and rich patients -- requires more refined questions. In our notation: 
``what is the patient's expected reward when taking the treatment, given she is wealthy (formally: $\expec[R|{A=1}, \text{always-taker}]$)?'' and  ``what is the patient's expected reward when taking the treatment, given she is poor (formally: $\expec[R|{A=1}, \text{complier}]$)'', see example~\ref{eg:rich}.

The solution proposed in \cite{bareinboim:15} is based on the regret decision criterion (RDC), which estimates the optimal action according to $\argmax_{a}\expec[R|A=a,\text{patient's inclination}]$, where the action chosen, $A=a$, may \emph{differ} from the patient's latent inclination. Essentially, computing the RDC requires imposing interventions via the $do(\cdot)$ operator. However, overruling a patient or doctor's decision is often impossible and/or unethical in clinical settings. The counterfactual information required to compute the RDC may therefore not be available in practice.

Compliance information does not act as a direct substitute for the $do(\cdot)$ operator. However, compliance information is often readily available and, as we show below, can be used to ameliorate the effect of confounders by giving a partial view into the latent structure of the population that the bandit is interacting with.

\subsection{Formal setting}
\label{sec:formal}
  

More formally, we consider a sequential decision making problem where a process mediates between the actions chosen by the algorithm and the action carried out in the world. The general game is as follows:

\begin{defn}[bandit with compliance information]\label{def:compliance_bandit}\eod
	At each time-step $t$, the player selects an action $c_t\in \cA=[k]=\{1,\ldots,k\}$ (the chosen action). The environment responds by carrying out an action $a_t\in\cA=[k]$ (the actual action) and providing reward $r_t\in[0,1]$, or loss $\ell_t$.

	The standard bandit setting is the special case where $a_t$ is either unobserved or $c_t = a_t$ for all $t\in[T]$.
\end{defn}

Compliance and outcomes are often confounded. For example, healthy patients may be less inclined to take a treatment than unhealthy patients. 
The set of compliance-behaviors is the set of functions $\cC=\{\nu:\cA\rightarrow\cA\}$ from advice to treatment-taken \cite{koller:09}. 

\begin{defn}[model assumptions]\label{def:assumptions}\eod
	We make the following assumptions:
	\begin{enumerate}
		\item Compliance-behavior $\nu(u)\in\cC$ depends on a latent variable sampled i.i.d. from unknown  $\bP(u)$.
		\item Outcomes $r(\nu(u), a,u)$ depend on compliance-behavior, treatment-taken and the latent $u$. That is, outcomes are a fixed function $r:\cC\times \cA\times U\rightarrow[0,1]$.		
	\end{enumerate}
\end{defn}

When $|\cA|=k=2$ (corresponding to control and treatment), we can list the compliance-behaviors explicitly.
\begin{defn}[compliance behaviors]\label{def:compliance_model}\eod
	For $k=2$, the following four subpopulations capture all deterministic compliance-behaviors:
	\begin{align}
		\text{never-takers ($\fN)$:}     &\quad c_0\mapsto a_0\qquad c_1\mapsto a_0\\
		\text{always-takers ($\fA)$:}    &\quad c_0\mapsto a_1\qquad c_1\mapsto a_1\\
		\text{compliers ($\fC)$:} &\quad c_0\mapsto a_0\qquad c_1\mapsto a_1\\ 
		\text{defiers ($\fD)$:}   &\quad c_0\mapsto a_1\qquad c_1\mapsto a_0
	\end{align}
	Let $p_s:= \expec_{u\sim \bP(u)}[\indic_{[\nu(u)=s]}]$ denote the probability of sampling from subpopulation $s\in\{\fN,\fA,\fC, \fD\}$.
\end{defn}
Unfortunately, the subpopulations cannot be distinguished from observations. For example, a patient that takes a prescribed treatment may be a complier or an always-taker. Nevertheless, observing compliance-behavior provides potentially useful side-information. The setting differs from contextual bandits because the side-information is only available \emph{after} the bandit chooses an arm.

\begin{defn}[stochastic reward model]\label{def:reward_model}\eod
	The expected reward given subpopulation $s$ and the actual treatment $j\in\cA$ is
	\begin{equation}
		r_{s,j} 
		:= \expec_{u\sim \bP(u)}\big[r(\nu(u),a_j,u)\,\big|\, \nu(u)=s\big]
		\label{eq:exp_rew}		
	\end{equation}	
	for $s\in \{\fN, \fA,\fC,\fD\}$.
\end{defn}
The goal of the player is to maximize the cumulative reward received,
i.e. choose a sequence of actions $(c_t)_{t\in T}$ that maximizes $\expec_{u\sim \bP(u)}\left[\sum_t r(\nu(u),\nu(u)(c_t), u)\right]$. We quantify the performance of algorithms in terms of regret, which compares the cumulative reward against that of the best action in hindsight.

\subsection{Reward protocols}
\label{sec:protocols}

Since compliance-information is only available after-pulling an arm, it cannot be used directly when selecting arms. However, how compliance-information can be used to modify the updates performed by the algorithm. For example, if the bandit recommends taking a treatment, and the patient does not do so, we have a choice about whether to update the arm that the bandit recommended (treatment) or the arm that the patient pulled (control).


\begin{defn}[reward protocols]\label{def:protocols}\eod
	We consider three protocols for assigning rewards to arms:
	\begin{enumerate}[P1.]
		\item \textbf{\chosen: chosen-treatment updates.}\\
		Assign reward $r_t$ to arm $j$ if $c_t=j$.
		\item \textbf{\actual: actual-treatment updates.}\\
		Assign reward $r_t$ to arm $j$ if $a_t=j$.
		\item \textbf{\comply: compliance-based updates.}\\
		Assign reward $r_t$ to arm $j$ if $c_t=j$ and $a_t=j$.
	\end{enumerate}
\end{defn}
Each protocol has strengths and weaknesses.

\paragraph{Protocol \#1: \chosen.}
Under \chosen, the bandit advises the patient on which treatment to take, and ignores whether or not the patient complies. 

\begin{prop}\label{prop:chosen}
	Standard regret bounds hold for any algorithm under \chosen.
\end{prop}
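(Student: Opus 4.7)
The plan is to argue by reduction: under the \chosen{} protocol, the learner faces what is, from its point of view, a completely ordinary bandit problem, so any off-the-shelf regret bound (UCB, EXP3, Thompson sampling, etc.) carries over unchanged. The only work is to exhibit the reduction carefully and check that the induced reward sequence satisfies the hypotheses of the base bound one wants to invoke.

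Concretely, I would first define, for each arm $j\in\cA$, the induced reward
\begin{equation}
\tilde r_j := \expec_{u\sim\bP(u)}\bigl[r(\nu(u),\nu(u)(j),u)\bigr],
\end{equation}
and note that when the learner chooses $c_t=j$, by Definition~\ref{def:assumptions} the latent variable $u_t$ is drawn i.i.d.\ from $\bP(u)$, so the observed reward $r_t$ is an i.i.d.\ draw (conditionally on $c_t=j$) from a fixed distribution on $[0,1]$ with mean $\tilde r_j$. Under \chosen, this $r_t$ is precisely what gets attributed to arm $j$; nothing else about $a_t$ or $\nu$ enters the algorithm's update. Hence the sequence $(c_t,r_t)_{t\leq T}$ that the algorithm sees is statistically indistinguishable from a run on a standard $k$-armed stochastic bandit whose arm means are $\{\tilde r_j\}_{j\in\cA}$ and whose rewards lie in $[0,1]$.

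Once this reduction is in hand, the statement is immediate: any stochastic bandit algorithm enjoying a regret bound $R(T,k,\Delta)$ against i.i.d.\ $[0,1]$-bounded rewards --- e.g.\ UCB1 with $R(T)=O\bigl(\sum_{j:\tilde\Delta_j>0}\log T/\tilde\Delta_j\bigr)$ or the minimax $O(\sqrt{kT\log k})$ rate of EXP3 --- achieves the same bound here, where the gaps $\tilde\Delta_j=\max_i\tilde r_i-\tilde r_j$ are computed with respect to the induced means. For adversarial variants one simply notes that the reward sequence is in $[0,1]$, which is all those bounds require.

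The only subtle point, and therefore the only thing I would be careful about, is the benchmark: the regret guarantee is against the best arm as measured by $\tilde r_j$, i.e.\ the best \emph{chosen} arm, not against the best actual treatment in $\{r_{s,j}\}$. This is exactly the ``standard'' comparator, and it is why \chosen{} gives no advantage over ignoring compliance --- but it also gives no loss, which is all the proposition claims. No further probabilistic machinery is needed.
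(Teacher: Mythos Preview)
Your proposal is correct and takes the same approach as the paper: the \chosen{} protocol ignores compliance, so the learner is facing an ordinary bandit instance and standard bounds apply verbatim. The paper's own proof is a single sentence to this effect; your version is simply a more careful unpacking of the reduction (defining the induced arm means $\tilde r_j$ and noting the benchmark is the best chosen arm), which is fine and arguably more informative.
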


\begin{proof}
	The regret bound for any bandit algorithm holds since the setting is the standard bandit setting.
\end{proof}

\paragraph{Protocol \#2: \actual.} 
Expected rewards depend on the treatment Eq.~\eqref{eq:exp_rew} chosen by the patient, and not directly on the arm pulled by the bandit. Thus, a natural alternative to \chosen\, is \actual, where the bandit assigns rewards to the treatment that the patient actually used -- which may not in general coincide with the arm that the bandit pulled.

\begin{prop}\label{prop:actual}
	There are settings where \actual\, outperforms \chosen\, and \comply.
\end{prop}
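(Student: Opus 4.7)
The plan is to prove this existence claim constructively by exhibiting a single concrete two-armed instance and comparing the three protocols directly on it. I would take $k=2$ with subpopulation weights $p_{\fC}=p_{\fA}=\tfrac{1}{2}$, $p_{\fN}=p_{\fD}=0$, and rewards $r_{\fC,1}=r_{\fA,1}=1$, $r_{\fC,0}=0$ (the value $r_{\fA,0}$ is irrelevant because always-takers never take control). A quick computation of the bandit's true per-step expected reward shows that recommending treatment yields $1$ while recommending control yields $\tfrac{1}{2}$, so the unique optimal chosen action is $c_t=1$.

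Next I would work out, for each protocol, what statistic each arm's empirical mean is converging to, together with the per-step update variance. Under \chosen, $\hat\mu_1\to 1$ and $\hat\mu_0\to\tfrac{1}{2}$, with the arm-$0$ updates being Bernoulli$(\tfrac{1}{2})$, so the suboptimality gap seen by the algorithm is $\tfrac{1}{2}$ with update variance $\tfrac{1}{4}$. Under \actual, arm $1$ is updated both when it is chosen and when an always-taker overrides a control recommendation, always with reward $1$; arm $0$ is updated only on the complier-half of control recommendations, always with reward $0$. So $\hat\mu_1\to 1$, $\hat\mu_0\to 0$, the gap is $1$, and every update is deterministic. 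Under \comply, updates occur only when $c_t=a_t$; this matches \actual's means but halves the effective sample size on arm $0$.

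From these computations, I would instantiate any standard stochastic-bandit regret bound (UCB1 or successive elimination suffice) whose leading term is $O((\sigma^2/\Delta)\log T)$. Plugging in the triples $(\Delta,\sigma^2)=(\tfrac12,\tfrac14)$, $(1,0)$, and $(1,0)$ with halved sample size gives strictly smaller per-arm exploration cost for \actual than for either of the others, so there exists a horizon $T_0$ beyond which \actual's expected regret on this instance is strictly below that of \chosen and \comply. To keep the statement algorithm-agnostic, I would formalize it as: on this instance, the sub-Gaussian Chernoff bound on the event ``best arm not yet identified after $n$ pulls of each arm'' decays strictly faster under \actual than under the other two protocols.

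The main obstacle is cosmetic rather than structural: giving ``outperforms'' a clean meaning. Since the claim is existential, I would commit to one concrete sense (smaller expected regret at some finite $T$, under one specified base algorithm) rather than attempting a universal comparison, because no such universal comparison can hold given that Proposition~\ref{prop:actual} in the opposite direction will follow for \chosen in the next example. The complier/always-taker instance above is well-suited because all three protocols produce consistent estimators pointing to the same optimal arm, so the comparison reduces purely to rates, avoiding the confounded regimes discussed in Section~\ref{sec:confounded}.
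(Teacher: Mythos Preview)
Your comparison of \actual\ against \chosen\ is fine and tracks the paper's logic: the paper also builds an instance where rewards depend only on the actual treatment, so that \chosen\ sees a diluted gap while \actual\ sees the full gap $r_1-r_0$. Your version is more quantitative (explicit $\Delta$ and $\sigma^2$ plugged into a UCB-type bound) whereas the paper argues qualitatively over the whole family with $r_{s,j}=r_j$ and a mix of never-takers, always-takers, and compliers; either route establishes that half of the claim.

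The gap is in your \actual\ versus \comply\ comparison. You assert that \comply\ ``halves the effective sample size on arm $0$,'' but on your instance that is false. Arm $0$ is updated under \actual\ exactly when $a_t=0$, which happens iff $c_t=0$ \emph{and} the patient is a complier (always-takers have $a_t=1$). Arm $0$ is updated under \comply\ exactly when $c_t=0$ and $a_t=0$, which is the same event. So the arm-$0$ sample paths under \actual\ and \comply\ are identical. The only difference is that \actual\ routes the always-taker observations (when $c_t=0$) to arm $1$, while \comply\ discards them. Since your arm-$1$ rewards are deterministic and arm $1$ is optimal, those extra arm-$1$ updates buy nothing: after one pull of each arm both protocols behave identically, and the regret is the same. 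Your instance therefore does not separate \actual\ from \comply.

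Two easy repairs. First, include never-takers with $p_{\fN}>0$ (as the paper does): then choosing $c_t=1$ still yields arm-$0$ updates under \actual\ but not under \comply, so \actual\ genuinely collects more information per round. Second, or in addition, make the rewards nondegenerate Bernoulli rather than $\{0,1\}$ deterministic, so that the extra samples translate into strictly tighter confidence intervals and a provable regret gap. With either change your $(\sigma^2/\Delta)\log T$ argument goes through; without it the \actual/\comply\ separation collapses on your own example.
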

\begin{proof}
	Suppose that $r_{s,j}=r_j$ depends on the treatment but not the subpopulation. Further suppose the population is a mix of always-takers, never-takers, and compliers -- but no defiers. 
	Always-takers and never-takers ignore the bandit, which therefore only interacts with the compliers. 

	The rewards used to update \chosen\, are, in expectation 
	\begin{align}
		\expec[R\,|\,c_0] 
		& = (1-p_\fA)\cdot r_0 + p_\fA\cdot r_1\\
		\expec[R\,|\,c_1]
		& = p_\fN\cdot r_0 + (1-p_\fN)\cdot r_1
	\end{align}
	whereas the rewards used to update \actual\, are
	\begin{equation}
		\expec[R\,|\,a_0] = r_0
		\text{ and }
		\expec[R\,|\,a_1] = r_1.
	\end{equation}
	It follows that
	\begin{align}
		r_{\fC,0} &  = \expec[R\,|\,a_0] \neq \expec[R\,|\,c_0] \text{ and}\\
		r_{\fC,1} &  = \expec[R\,|\,a_1] \neq \expec[R\,|\,c_1].
	\end{align}
	Thus, \actual\, assigns rewards to arms based on their effect on compliers (which are the only subpopulation interacting with the bandit), whereas the rewards assigned to arms by \chosen\, are diluted by patients who do not take the treatment. Finally, \actual\, outperforms \comply\, because it updates more frequently.
\end{proof}

However, \actual\, can fail completely.
\begin{eg}[\actual\, has linear regret; defiers]\label{eg:defiers}\eod
	Suppose that the population consists in defiers and further suppose the treatment has a positive effect: $r_{\fD,0}=0$ and $r_{\fD,1}=1$.
	Bandit algorithms using protocol \#2 will learn to pull arm $c_1$, causing defiers to pull arm $0$. The best move in hindsight is the opposite.
\end{eg}
A population of defiers is arguably a pathological special case. The next scenario is more realistic in clinical trials:
\begin{eg}[Linear regret; harmful treatment]\label{eg:rich}\eod
	Suppose there are two sub-populations: the first consists of rich, healthy patients who always take the treatment. The second consists of poor, less healthy patients who only take the treatment if prescribed. Finally, suppose the treatment \emph{reduces} wellbeing by $0.25$ on some metric. We then have
	\begin{align}
	    \expec[R|a_0] & = p_\fC \cdot r_{\fC,0} = 0p_\fC \\
	    \expec[R|a_1] & = p_\fC \cdot r_{\fC,1} + p_\fA\cdot r_{\fA,1}
	    = -0.25p_\fC + 0.75p_\fA
	\end{align}
	If the population of healthy always-takers $p_\fA$ is sufficiently large, then \actual\, assigns higher rewards to the \emph{harmful} treatment arm.
\end{eg}

\paragraph{Protocol \#3: \comply.}
Finally, \chosen\, and \actual\, can be combined to form \comply, which only rewards an arm if (i) it was chosen by the bandit and (ii) the patient followed the bandit's advice.

\begin{prop}\label{prop:comply}
	There are settings where \comply\, outperforms \chosen\, and \actual.
\end{prop}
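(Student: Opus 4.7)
Following the template of the proof of Proposition~\ref{prop:actual}, I would exhibit a mixture of always-takers, never-takers, and compliers in which the per-update expected gap between the two arms is strictly larger under \comply\, than under either of the other two protocols, and then argue that this gap advantage translates into strictly faster learning.

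\textbf{Construction.} Fix a small complier fraction $p_\fC$, roughly equal always-taker and never-taker fractions $p_\fA$ and $p_\fN$, and no defiers ($p_\fD=0$). Take the treatment to strictly help compliers, $r_{\fC,1}>r_{\fC,0}$, and choose the baselines so that $r_{\fA,1}=r_{\fN,0}$, which makes the contamination of the two arms by non-compliers symmetric and prevents any protocol from mis-ranking the arms. The always-takers play the role they did in Example~\ref{eg:rich}, but are now counterbalanced by an equal never-taker mass so that the complier treatment effect is the only signal each protocol has to recover.

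\textbf{Gap calculation.} As in the proof of Proposition~\ref{prop:actual}, I condition on the arm being updated and compute the expected reward credited to each arm under each protocol. Under \chosen, never-takers and always-takers deposit identical baseline rewards on both arms, so the gap collapses to $p_\fC(r_{\fC,1}-r_{\fC,0})$, which is small when $p_\fC$ is small. Under \actual, never-takers contribute only to arm $0$ (weight $p_\fN$) and always-takers only to arm $1$ (weight $p_\fA$); with $r_{\fA,1}=r_{\fN,0}$ these contaminations cancel in the difference, but the surviving complier signal has weight $0.5p_\fC$ in a denominator dominated by $p_\fN$ or $p_\fA$. Under \comply, the $c=a$ filter halves each non-complier weight while leaving the relative complier weight unchanged, so compliers enter each arm's estimate with a strictly larger proportion than under \actual, and a short algebraic comparison of the two ratios shows $\Delta_\comply>\Delta_\actual>\Delta_\chosen$.

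\textbf{Main obstacle.} The non-trivial step is upgrading a larger per-update gap to a strictly smaller regret, since \comply\, discards any round in which the patient does not comply and so updates strictly less often than \chosen\, or \actual. I would close the proof by verifying that in the chosen parameter regime (small $p_\fC$, comparable $p_\fA$ and $p_\fN$, and $r_{\fA,1}=r_{\fN,0}$) the product of squared gap and update rate, which governs the sample complexity in UCB- and Thompson-style analyses, is strictly larger under \comply. A cleaner but less robust alternative, in the spirit of Example~\ref{eg:rich}, would be to push the construction into a regime where \chosen\, or \actual\, actually mis-ranks the arms while \comply\, gets the ranking right, converting the signal-to-noise comparison into a linear-versus-sublinear regret separation.
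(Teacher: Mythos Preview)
Your construction is correct and would establish the proposition, but the route differs from the paper's in two respects. First, the paper splits the argument across two settings: for \comply\ versus \chosen\ it simply refers back to the setting of Proposition~\ref{prop:actual} (where rewards depend only on the treatment, so \comply\ inherits \actual's clean signals $r_0,r_1$ while \chosen\ is diluted); for \comply\ versus \actual\ it introduces a separate \emph{asymmetric} population with $r_{\fN,0}>r_{\fC,0/1}>r_{\fA,1}$ and derives the sandwich inequalities
\[
r_{\fC,0}\le\expec[R\mid c_0,a_0]\le\expec[R\mid a_0]
\quad\text{and}\quad
r_{\fC,1}\ge\expec[R\mid c_1,a_1]\ge\expec[R\mid a_1],
\]
concluding that \comply's per-arm estimates are strictly closer to the complier rewards than \actual's. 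You instead build a single symmetric setting ($r_{\fA,1}=r_{\fN,0}$, $p_\fA\approx p_\fN$) and compare the arm-gap $\Delta$ directly across all three protocols. Both arguments are at the same level of informality; yours has the advantage of a unified construction, while the paper's sandwich inequalities hold for all $q_0,q_1$ without having to freeze them at $\tfrac12$.

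Second, you are more careful than the paper about converting ``larger gap'' into ``smaller regret'': the paper stops at ``\comply's estimate is more accurate'' and never addresses the fact that \comply\ discards non-compliant rounds. Your proposal to compare $\Delta^2$ times the update rate is the right instinct for a UCB/Thompson-style analysis. One small correction to your outline: in the exact symmetric regime with $q_0=q_1=\tfrac12$ and $2p_\fA+p_\fC=1$ you actually get $\Delta_{\actual}=\Delta_{\chosen}$ (both equal $p_\fC(r_{\fC,1}-r_{\fC,0})$), so the strict chain $\Delta_{\comply}>\Delta_{\actual}>\Delta_{\chosen}$ collapses to $\Delta_{\comply}>\Delta_{\actual}=\Delta_{\chosen}$ --- which is still all the proposition requires.
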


\begin{proof}
	It is easy to see that \comply\, outperforms \chosen\, in the setting of Proposition~\ref{prop:actual}.
	
	Consider a population of never-takers, always-takers and compliers. Suppose that never-takers are healthier than compliers $r_{\fN,0}> r_{\fC,0/1}$ whereas always-takers are less healthy $r_{\fA,1}<r_{\fC,0/1}$.  
	
	Let $q_0$ and $q_1$ be the probability that the bandit pulls arms 0 and 1 respectively. The expected rewards received by \actual\, are
	\begin{align}
		\expec[R\,|\,a_0]
		& = \frac{q_0 p_{\fC}r_{\fC,0} + p_\fN r_{\fN,0}}{q_0p_\fC + p_\fN}
		\\
		\expec[R\,|\,a_1]
		& = \frac{p_\fA r_{\fA,1} + q_1p_\fC r_{\fC,1}}{p_\fA + q_1 p_\fC}
	\end{align}
	whereas the rewards used to update \comply\, are
	\begin{align}
		\expec[R\,|\,c_0,a_0] & = \frac{p_{\fC}r_{\fC,0} + p_\fN r_{\fN,0}}{p_\fC + p_\fN}
		\\
		\expec[R\,|\,c_1,a_1] & = \frac{p_\fA r_{\fA,1} + p_{\fC}r_{\fC,1}}{p_\fA + p_\fC}
	\end{align}
	It follows that 
	\begin{align}
		r_{\fC,0} &  \leq\expec[R\,|\,c_0,a_0]\leq \expec[R\,|\,a_0] \text{ and}\\
		r_{\fC,1} & \geq\expec[R\,|\,c_1,a_1] \geq \expec[R\,|\,a_1] 
	\end{align}
	The reward estimates for compliers are diluted under both \texttt{Actual} and \texttt{Comply}. However, \comply's estimate is more accurate.
\end{proof}

It is easy to see that \comply\, also has unbounded regret on example~\ref{eg:rich}.

The rewards assigned to each arm by the three protocols are summarized in the table below. None of the protocols successfully isolates the compliers. It follows, as seen above, that which protocol is optimal depends on the structure of the population, which is unknown to the learner.
\begin{center}
\begin{tabular}{| l | c | c | c |}
\hline
Arm updated & \chosen & \actual & \comply \\
\hline
      & $r_{\fN,0}$ & $r_{\fN,0}$ & $r_{\fN,0}$ \\
$i=0$ & $r_{\fC,0}$ & $r_{\fC,0}$ & $r_{\fC,0}$ \\
      & $r_{\fA,1}$ &             &             \\
      & $r_{\fD,1}$ & $r_{\fD,0}$ &             \\
\hline
      & $r_{\fN,0}$ &             &             \\
$i=1$ & $r_{\fC,1}$ & $r_{\fC,1}$ & $r_{\fC,1}$ \\
      & $r_{\fA,1}$ & $r_{\fA,1}$ & $r_{\fA,1}$ \\
      & $r_{\fD,0}$ & $r_{\fD,1}$ &             \\
\hline
\end{tabular}
\end{center}
The table can be extended with additional reward protocols. In this paper, we restrict attention to the three most intuitive protocols.


\section{Algorithms}
\label{sec:algol}

In the non-compliance setting there is additional information available to the algorithm. Ignoring the compliance-information (i.e. using the \chosen\, protocol) reduces to the standard bandit setting. However, it should be possible to improve performance by taking advantage of observations about when treatments are \emph{actually} applied. Using compliance-information is not trivial, since bandit algorithms that rely purely on treatments (\actual) or purely on compliance (\comply) can have linear regret.

This section proposes two hybrid algorithms that take advantage of compliance information, have bounded regret, and empirically outperform algorithms running the \chosen\, protocol.

\subsection{Hierarchical bandits}

A natural idea is to use the three protocols to learn three experts and, simultaneously, learn which expert to apply. The result is a hierarchical bandit algorithm. The hierarchical bandit integrates compliance-information in a way that ensures the algorithm (i) has no-regret, because one of the base-algorithms uses \chosen, and therefore has no regret; and (ii) benefits from the compliance-information if it turns out to be useful.

The general construction is as follows. At the bottom-level are three bandit algorithms implementing the three protocols (\chosen, \actual\, and \comply). On the top-level is a fourth bandit algorithm whose arms are the three bottom-level algorithms. The bottom-level bandits optimally implement the three protocols, whereas the top-level bandit learns which protocol is optimal.

\begin{algorithm}[tb]
   \caption{\texttt{HierarchicalBandit (HB)}}
   \label{alg:hier-exp}
   \begin{algorithmic}   
   \STATE {\bfseries Input:}
   	 Bandits $\cB_i$ running \texttt{NoRegretAlgorithm} on \chosen, \actual\, and \comply\, for $i =\{1,2,3\}$ respectively, with arms corresponding to treatments
   	 \STATE {\bfseries Input:}
   	 Bandit $\cH$ running \texttt{NoRegretAlgorithm}, with arms corresponding to $\cB_i$ above
    \FOR{$t=1$ {\bfseries to} $T$}
	\STATE Draw bandit $i\in\{1,2,3\}$ from $\cH$ and arm $j$ from $\cB_i$
	\STATE Pull arm $j$; incur loss $\ell^{(t)}$; observe compliance
	\STATE Update $\cH$ with loss applied to bandit-arm $i$
	\IF{$i=1$}
	\STATE Update $\cB_1$ with loss applied to treatment-arm $j$
	\ENDIF
	\STATE Update $\cB_{2/3}$ with loss according to relevant protocol
   	\ENDFOR

       	\end{algorithmic}
\end{algorithm}          

The top-level bandit is \emph{not} in a stochastic environment even when the external environment is stochastic, since the low-level bandits are learning. We therefore use \texttt{EXP3} as the top-level bandit \cite{auer:02b}. 

\begin{thm}[No-regret with respect to \actual, \comply\, and individual treatment advice]\label{thm:cexp}\eod
	Let \texttt{EXP3} be the no-regret algorithm used in Algorithm~\ref{alg:hier-exp} for both the bottom and top-level bandits, with suitable choice of learning rate. Then, \texttt{HierarchicalBandit} satisfies
	\begin{equation}
		\expec\left[\sum_{t=1}^T\ell^{(t)}_{a^{(t)}}\right]
		\leq \sum_{t=1}^T \tilde{\ell}^{(t)}_{\actual/\comply}
		+ O(\sqrt{T})
	\end{equation}
	where $\tilde{\ell}^{(t)}_{\actual/\comply}$ denotes the expected loss vector of \texttt{EXP3} under the respective protocol on round $t$. 
	Furthermore, the regret against individual treatments $j\in[k]$ is bounded by
	\begin{equation}
		\expec\left[\sum_{t=1}^{T} \ell^{(t)}_{a^{(t)}}\right]
		\leq \sum_{t=1}^T\ell^{(t)}_{j}
		+ O(\sqrt{T k\log k})
	\end{equation}
\end{thm}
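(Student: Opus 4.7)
The plan is to compose the adversarial regret bound for \texttt{EXP3} at the two levels of the hierarchy. At the top level, the three base bandits $\cB_1,\cB_2,\cB_3$ present an adaptive (non-stationary) loss sequence to $\cH$, since their internal states evolve over time, so the appropriate tool is the adversarial \texttt{EXP3} guarantee, giving $O(\sqrt{TK\log K})$ regret against any fixed comparator arm with $K=3$. At the bottom level, $\cB_1$ runs \texttt{EXP3} on $k$ treatment arms against i.i.d.\ losses delivered by \chosen, so the same \texttt{EXP3} bound applies with $K=k$.

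For the first inequality, I apply the top-level bound with comparator $i^*\in\{2,3\}$, obtaining
\begin{equation}
\expec\!\left[\sum_{t=1}^T \ell^{(t)}_{a^{(t)}}\right] \;\leq\; \expec\!\left[\sum_{t=1}^T \tilde\ell^{(t)}(i^*)\right] + O(\sqrt{T}).
\end{equation}
The key observation is that because compliance is observed every round, $\cB_2$ and $\cB_3$ can be updated on \emph{every} round regardless of whether $\cH$ actually selects them. Their internal state at time $t$ therefore coincides with that of a standalone \texttt{EXP3} instance running \actual\, (resp.\ \comply) in isolation, so the counterfactual loss of ``always play $\cB_{2/3}$'' is exactly $\tilde\ell^{(t)}_{\actual/\comply}$, and the first inequality follows.

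For the second inequality, I chain the two levels: top-level \texttt{EXP3} with comparator $i^*=1$ bounds the algorithm's loss by the cumulative loss of a standalone $\cB_1$ up to $O(\sqrt{T})$, and the bottom-level \texttt{EXP3} bound on $\cB_1$ contributes $O(\sqrt{Tk\log k})$ against any fixed treatment arm $j\in[k]$; adding the two (and absorbing the smaller $O(\sqrt{T})$ term) gives the claim.

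The main obstacle is a mild bookkeeping issue: the pseudocode updates $\cB_1$ only on the rounds where $i_t=1$, so $\cB_1$'s actual trajectory differs from the ``always-on'' standalone instance used as the comparator in the chain above. The cleanest resolution is a ghost-sample coupling, in which the comparator is a virtual \texttt{EXP3} instance that receives a freshly drawn i.i.d.\ loss on every round; since losses are i.i.d., the virtual run is statistically equivalent to a length-$T$ standalone execution, and the top-level regret already measures the algorithm against this counterfactual. A cruder alternative is to condition on $T_1$, the number of rounds with $i_t=1$, and use $\sqrt{T_1 k\log k}\leq \sqrt{T k\log k}$; the dominant $O(\sqrt{Tk\log k})$ term absorbs the discrepancy either way.
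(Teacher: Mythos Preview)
Your proposal is correct and follows essentially the same approach as the paper: the paper's proof is a one-line appeal to its Lemma~\ref{lem:meta-exp} (the \texttt{HEXP3} bound), which in turn is obtained by applying the \texttt{EXP3}/\texttt{Hedge} regret bound once at the top level and once at the bottom level, then instantiating with $M=3$ and $N=k$---exactly the two-step composition you carry out. Your discussion of the update asymmetry between $\cB_1$ and $\cB_{2/3}$ and the ghost-sample coupling is more careful than the paper, which simply invokes the lemma without addressing this bookkeeping point.
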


\begin{proof}
	Apply Lemma~\ref{lem:meta-exp} to $\texttt{HiearchicalBandit}$.
\end{proof}

Using  \texttt{EXP3} at the top-level and a \texttt{ThompsonSampler} in the  bottom-level also yields a no-regret algorithm. We modify the Thompson sampler to incorporate importance weighting, see Algorithm~\ref{alg:bts} in the Appendix.

\begin{algorithm}[tb]
   \caption{\texttt{ThompsonBounded (TB)}}
   \label{alg:dt}
   \begin{algorithmic}   
   \STATE {\bfseries Input:} Bandit algorithm $\cH$
   \STATE {\bfseries Input:} \texttt{Thompson} sampler under \chosen\, protocol
   	 \FOR{$t=1$ {\bfseries to} $T$}
	\STATE Sample $t$ and $t'$ from \texttt{Thompson}
	\IF{$t=t'$} 
	\STATE Pull arm sampled from \texttt{Thompson}
	\ELSE
	\STATE Pull arm chosen by $\cH$
	\ENDIF
	\STATE Incur loss, update algorithm used to pull arm
   	\ENDFOR
   	\end{algorithmic}
\end{algorithm}

\subsection{Thompson bounding}

The second strategy starts from the observation that Thompson sampling often outperforms other bandit algorithms in stochastic settings \cite{thompson:33, chapelle:11} and has logarithmic regret \cite{agrawal:12, kaufmann:12}. A natural goal is then to design an algorithm that performs like Thompson sampling under the \chosen\, protocol in the long-run -- since Thompson sampling under \chosen\, is guaranteed to match the best action in hindsight in $O(\log T)$ time -- but also takes advantage of compliance side-information when Thompson sampling has \emph{not} converged onto sampling a single arm with high probability. 

The proposed algorithm, \texttt{ThompsonBounded}, adds an additional component to hierarchical bandit above: a Thompson sampler that learns from arm-pulls according to the \chosen\, protocol.
The Thompson sampler is initially unbiased between arms; as it learns, the probabilities the Thompson sampler assigns to arms become increasingly concentrated. \texttt{ThompsonBounded} takes advantage of Thompson sampling's uncertainty about which arm to pull in early rounds to safely introduce side-information. To do so, \texttt{ThompsonBounded} draws two samples: if they agree, it plays a third Thompson sample. If they disagree it plays the arm chosen by the hierarchical bandit.

Intuitively, if Thompson sampling is uncertain, then \texttt{ThompsonBounded} tends to use the hierarchical bandit. As the Thompson sampler's confidence increases, \texttt{ThompsonBounded} is more likely to follow its advice. The next theorem shows that mixing in side information has no qualitative effect on the algorithm's regret, which grows as $\log(T)$. 

\begin{thm}\label{thm:tb}
	The regret of \texttt{ThompsonBounded} is bounded by
	\begin{equation}
		\regret_{\texttt{TB}}(T) \leq O(\log T).
	\end{equation}
\end{thm}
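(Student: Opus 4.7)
The plan is to decompose the regret into contributions from \emph{agreement} rounds (where $t=t'$, so \texttt{Thompson} both plays and updates) and \emph{disagreement} rounds (where $\cH$ plays), and to bound each contribution by $O(\log T)$. Let $p_i^{(t)}$ denote the Thompson posterior probability on arm $i$ at round $t$, let $i^\star$ be the optimal arm with suboptimality gaps $\Delta_i$, and let $q_t = \sum_i (p_i^{(t)})^2$ be the probability of agreement.

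First I would handle the agreement rounds. The key observation is that \texttt{Thompson} only plays and only receives updates on these rounds, so from its own viewpoint it is indistinguishable from a standalone Thompson sampler over a subsequence of at most $T$ pulls. Invoking the standard logarithmic stochastic regret bound for Thompson sampling \cite{agrawal:12,kaufmann:12}, the cumulative regret contributed by agreement rounds is $O(\log T)$.

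For the disagreement rounds, each contributes at most $\Delta_{\max} = O(1)$ regret, so it suffices to bound their expected number. Two ingredients drive this bound. The first is the elementary inequality
\begin{equation}
1 - q_t \;=\; 1 - \sum_i \bigl(p_i^{(t)}\bigr)^2 \;\le\; 2\bigl(1 - p_{i^\star}^{(t)}\bigr) \;=\; 2\!\sum_{i \ne i^\star} p_i^{(t)},
\end{equation}
which trades a probability of disagreement for total posterior mass on suboptimal arms. The second is that the posterior is piecewise constant between consecutive agreement rounds, so I would group rounds into windows of constant posterior; the $k$-th window has geometric length with success parameter $q_k$, hence expected number of disagreement rounds $(1-q_k)/q_k$. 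Summing and applying the bound above gives a quantity of the form $\sum_k \bigl(\sum_{i \ne i^\star} p_i^{(k)}\bigr)/q_k$, which I would control using the standard Thompson-regret estimate $\sum_k p_i^{(k)} = O(\log T)$ for suboptimal $i$.

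The hard part will be controlling the denominator $q_k$ in the windowing sum, especially in the early windows before Thompson has concentrated on $i^\star$. The crude bound $q_k \ge 1/K^2$ absorbs the early windows into a $T$-independent constant, while after Thompson has observed a modest number of samples of $i^\star$ the posterior probability $p_{i^\star}^{(k)}$ is bounded below by a constant with high probability, so $q_k$ is bounded away from zero and the tail sum is $O(\log T)$. Combining the two steps then yields $\regret_{\texttt{TB}}(T) \le O(\log T)$.
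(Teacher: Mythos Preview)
Your approach is essentially the paper's: the pivotal step in both is the inequality $1-\sum_i p_i^2 \le 2\sum_{i\neq i^\star} p_i = 2p_F$, which ties the deviation probability to Thompson's suboptimal-arm mass, after which the Thompson logarithmic bound finishes the job. The paper's proof stops there---it simply asserts that the extra regret from deviating is at most twice the regret Thompson incurs on suboptimal arms---whereas you are more explicit about the fact that the posterior is frozen between agreement rounds and therefore introduce the windowing decomposition $\sum_k (1-q_k)/q_k$. That extra care is a genuine improvement in rigor over the paper's one-line argument.

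Where you overcomplicate is the ``hard part.'' There is no need to split into early and late windows or to invoke high-probability concentration of $p_{i^\star}^{(k)}$: by Cauchy--Schwarz (or convexity of $x\mapsto x^2$), $q_k=\sum_i (p_i^{(k)})^2 \ge 1/K$ \emph{uniformly} for all $k$, so $(1-q_k)/q_k \le 2K\,p_F^{(k)}$ and the tail sum is immediately $O(K\log T)$ via the Thompson pull-count bound $\sum_k p_i^{(k)}=O(\log T)$. Note also that your stated crude bound $q_k\ge 1/K^2$ is weaker than necessary; the correct elementary bound is $1/K$.
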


\begin{proof}	
	Suppose without loss of generality that arm 1 yields a higher average payoff. Let $p_j$ be the probability that \texttt{Thompson} assigns to arm $j$ on round $t$, so that $p_F=\sum_{j=2}^kp_j$ is the probability that Thompson sampling does \emph{not} pulling arm 1. The probability that \texttt{ThompsonBounded} follows the hierarchical bandit is then 
	$1-\sum_{j=1}^kp_j^2  = 
	    1-(1-p_F)^2 - \sum_{j=2}^{k-1}p_j^2 \leq 2p_F.$
	The additional expected regret from deviating from Thompson sampling is therefore at most twice the regret \texttt{Thompson} incurs by pulling suboptimal arms. Finally, it was shown in \cite{agrawal:12,kaufmann:12} that Thompson sampling has logarithmic regret.
\end{proof}

\subsection{Data-efficiency.}

The no-regret guarantees for \texttt{HB} and \texttt{TB} are provided, respectively, by the bottom-level expert running the \chosen\, protocol and the top-level Thompson sampler learning from the \chosen\, protocol. We refer to these strategies as \emph{certified}. The other strategies comprising the hybrids are not certified, but rather may boost empirical performance by bringing side-information from the compliance. 

As described, the hybrid algorithms are data-inefficient since, despite the i.i.d. assumption on the patient population, the certified strategies only learn when they are executed. We describe a \emph{recycling trick} to improve the efficiency of the certified strategies. 

A naive approach to increase data-efficiency is to reward the certified strategy on rounds where the executed strategy selects the same action as the certified strategy. However, this introduces a systematic bias. For example, consider two strategies: the first always picks arm 1, the second picks arms 1 and 2 with equal probability. Running a top-level algorithm that picks both with equal probability results in a mixed distribution biased towards arm 1.

The recycling trick stores actions and subsequent rewards by non-certified strategies in a cache. When there is at least one of each action in the cache, the certified strategy is rewarded on rounds where it was not executed by sampling, without replacement, from the cache. Sampling without replacement is important in our setting since it prevents early unrepresentative samples introducing a bias into the behavior of the certified strategy through repeated sampling. A related trick, referred to as ``experience replay'' was introduced in reinforcement learning in \cite{Mnih:2015wq}.



\section{Clinical trial data}
\label{sec:data}

The simulation data is taken from The International Stroke Trial (IST) database. A randomised trial where patients believed to have acute ischaemic stroke are treated with: aspirin, subcutaneous heparin, both, or neither \cite{ist:97}.
Complete compliance and mortality data at 14 days for each of 19,422 patients
To the best of our knowledge, this is the largest publicly available clinical trial with compliance data.\footnote{An extensive search failed to find other open randomized clinical trials datasets that included compliance. A systematic review by  \cite{ebrahim:14} identified 37 reanalyses of patient-level data from previously published randomized control trials; five were performed by entirely independent authors.} Data from drug abuse clinical trials is used in \cite{kuleshov:14}. However, noncompliance is coded as failure so this source, and drug dependence treatments more generally, cannot be used in our setting. Given there is substantial loss of follow up at the 6 month measure we focus on the 14 day outcome.

\subsection{Compliance variables}

The main sources of noncompliance in the dataset are: the initial event not a stroke, clinical decision, administration problem, missed out more than 3 doses. A detailed table and counts of these are included in the datasets open access article \cite{ist:11}. 
While these might initially seem like reasons to discard the patients from the dataset, noncompliance is not necessarily random. Discarding these patients could cause algorithms to have unbounded regret (since the loss we care about is over all patients). In particular, misdiagnoses, administrative problems, not taking doses and other sources of noncompliance can be confounded with a patient's socioeconomic status, age, and overall health, as well as the load and quality of the medical staff. 

To construct our ``actual arm'' variable, we assume that noncompliance entails taking the opposite treatment.
This is well-defined in the Aspirin case, which only has two arms, and thus noncompliance with placebo is likely to be taking the treatment.

Assigning an actual arm pulled in the heparin part of the trial is less clear cut, as it has three arms: none, low and medium. We construct the actual arm variable by combining assignment and noncompliance. Noncompliance with respect to low and medium assigned treatments is coded as  not-takers, while noncompliance by a patient prescribed ``none'' is coded as low.

\section{Results}
\label{sec:results}

\paragraph{Stroke trial.}

\begin{figure}[ht!]
\vskip 0.2in
\begin{center}
\centerline{\includegraphics[width=0.5\textwidth]{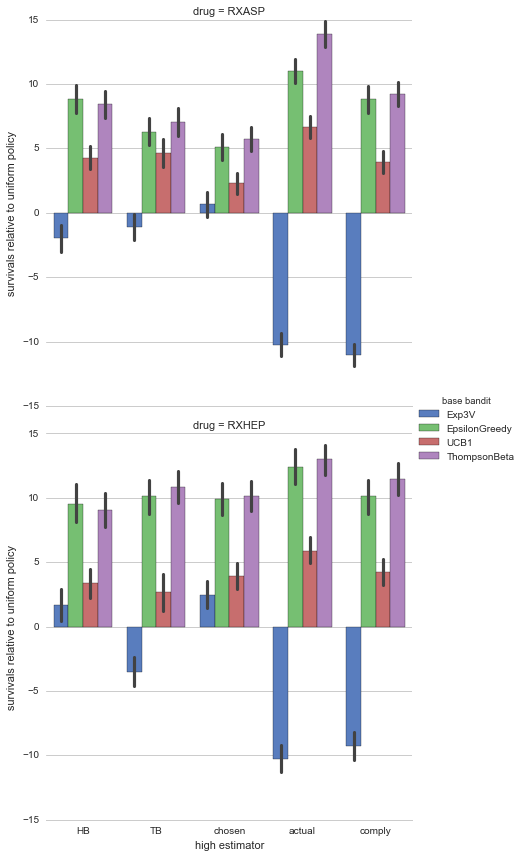}}
\caption{\textbf{14 Day survivals:} surplus over expectation of uniform random arm of a 10,000 patient each for simulated trials of Aspirin and Heparin.}
\end{center}
\vskip -0.2in
\end{figure}

In the stroke trial experiments, performance is measured in terms of the \emph{excess relative survival} or \emph{surplus} of different strategies. That is, the number of surviving patients in expectation, relative to a baseline that uniformly randomizes between treatment and control.

We simulate 10,000 patients per run, which allows us to not oversample the data in any single simulation; 2000 runs are performed for each algorithm.
The \texttt{EXP3} gamma parameter was set ahead of time to 0.085, a choice determined by the regret-bounds for $T=10000$ and $K=2$ or $3$. Epsilon-Greedy uses a standard annealing schedule. No data dependent parameter tuning was used.
The simulation is carried out by creating a ``counterfactual patient'' by sampling (i.i.d.) one patient from each of the treatment and control groups in the clinical trial. If the algorithm selects the treatment, it then receives the reward and observes the action taken by the subject sampled form the treatment group, and vice versa for the control.

Empirically, \texttt{ThompsonBounded} achieves a surplus of 8.9 extra survivals (that is, human lives) with 95\% confidence interval $[8.1,9.7]$, relative to the randomized baseline.
\texttt{HierarchicalBandit} with \texttt{Epsilon Greedy} as the base algorithm achieves a surplus of 9.2 (CI: $[8.3,10.0]$)
In contrast, the best performing strategy that is not compliance aware is Thompson sampling, which yields 7.9 extra survivals (CI: $[7.2,8.7]$). 

The gains are largely concentrated in the Aspirin trial, which is consistent with the lack of benefits or severe ill effects found in the original study \cite{ist:97} for heparin, and with the small but beneficial effect found for aspirin. 
If the underlying treatment has no positive or negative effect, side-information after the fact alone cannot be helpful.

Note that \actual, and to a lesser extent \comply, perform better than either \chosen\, or the hybrid algorithms. However, these cannot be used directly since no guarantees apply. The performance of the hybrids benefits from the information encoded in \actual\, and \comply\, whilst keeping the guarantees of \chosen.


\paragraph{Synthetic data.}

To better understand the behaviour of the algorithms in a more varied range of settings, we present results of simulations with synthetic data.

\begin{figure}[ht!]
\label{fig:rich}
\vskip 0.2in
\begin{center}
\centerline{\includegraphics[width=0.5\textwidth]{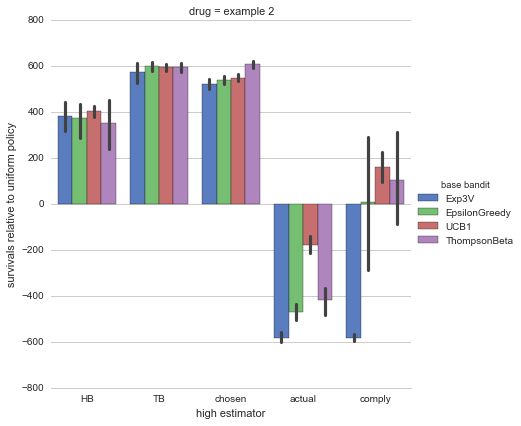}}
\caption{\textbf{Example 2 (rich and poor patients):} surplus over expectation of uniform random arm on 1,000 bootstrap samples simulating a 10,000  trial on each.}
\end{center}
\vskip -0.2in
\end{figure} 

The first simulation illustrates example~2. For comparison, $T$ is kept at 10,000, and consider the binary outcome case. We assign half the patients to rich and half to poor randomly. Rich patients always take the treatment and their outcome, which would otherwise be 1 with $p=1$, becomes 1 with  $p=0.75$. Poor patients only take the treatment when prescribed, their favorable outcome has probability $p=0.5$ without treatment; taking the treatment reduces the probability of a favorable outcome to $p=0.25$. Fig.~2 shows that the performance of \actual\, and \comply\, is much worse than \chosen\, and the hybrid algorithms.

\begin{figure}[ht!]
\label{fig:t12}
\vskip 0.2in
\begin{center}
\centerline{\includegraphics[width=0.5\textwidth]{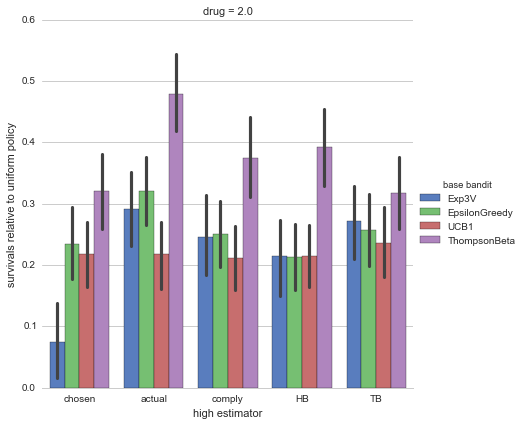}}
\caption{Expected surplus rewards relative to random assignment for an adaptive trial over 12 patients over 1,000 simulations.}
\end{center}
\vskip -0.2in
\end{figure}
The second simulation concerns small $T$. A motivation for very small $T$ adaptive clinical trials is provided by rare diseases. The overall size of the patient population is by construction severely restricted in this setting.  The priors for the mechanisms of action are also often poorly understood, so potential alternative treatments can have radically different probabilities of success. We simulate a $T=12$ adaptive trial with binary outcomes, with two treatments and expected rewards drawn uniformly from the unit interval, and compliance uniformly at random. We sample 1,000 such simulations. While our bounds are vacuous in this settings, it is interesting that there is on average an  improvement from taking the noncompliance information into account, see Fig.~3.




\section{Conclusions}

This paper introduced compliance information into the bandit setting. Compliance-information reflects the treatment actually taken by the patient, rather than the algorithm's recommendation. In many cases (perhaps most cases in practice) compliance information can be used to accelerate learning. Unfortunately, however, naively incorporating compliance information leads to algorithms with linear regret as seen in example~\ref{eg:rich} and figure~2. We have therefore developed hybrid strategies that are the first algorithms that simultaneously incorporate compliance information while maintaining a worst-case guarantee. 

Empirically, \texttt{TB} achieves a surplus of 8.9 extra survivals (that is, human lives) and \texttt{HB} achieves 9.2 surplus lives compared with 7.9 for the best classical algorithm. This suggests hybrid algorithms can make a significant difference to clinical outcomes.


{
\footnotesize
\bibliography{compliance}
\bibliographystyle{icml2016}
}

\clearpage
\section{No-regret for \texttt{HierarchicalBandit}}
\label{sec:bound}

This section shows that constructing a hierarchical bandit with \texttt{EXP3} yields a no-regret algorithm. The result is straightforward; we include it for completeness. A similar result was shown in \cite{chang:05}. 

First, we construct a hierarchical version of \texttt{Hedge}, Algorithm~\ref{alg:meta-hedge}, which is applicable in the full-information setting. On the bottom-level are $M$ instantiations of \texttt{Hedge}. Instantiation $i$, for $i\in[M]$, plays an $N$-dimensional weight vector and receives $N$-dimensional loss vector $\loss^{(t)}_{i}$ on round $t$. We impose the assumption that all instantiations play $N$-vectors for notational convenience. The top-level is another instantiation of \texttt{Hedge}, which plays a weighted combination of the bottom-level instantiations.

\begin{algorithm}[tb]
   \caption{\texttt{Hierarchical Hedge (HHedge)}}
   \label{alg:meta-hedge}
   \begin{algorithmic}   
   	\STATE {\bfseries Input:} $\eta,\rho>0$\\
   	 $v^{(1)}_{i}=1$ for $i\in[M]$;\\ 
   	 $w^{(1)}_{i,j}=1$ for $(i,j)\in[M]\times[N]$
	\FOR{$t=1$ {\bfseries to} $T$}
	\STATE Set $\x^{(t)} \leftarrow \vt^{(t)}/X^{(t)}$ where $X^{(t)} = \sum_{i=1}^M v^{(t)}_{i}$.
	\STATE Set $\y^{(t)}_{i} \leftarrow \wt^{(t)}_{i}/Y^{(t)}_{i}$ where $Y^{(t)}_{i} = \sum_{j=1}^N w^{(t)}_{i,j}$.
	\STATE Receive feedback $\loss^{(t)}\in [0,1]^{M\times N}$ 
	\STATE Incur loss $\sum_{i,j=1}^{M,N} x^{(t)}_{i}\cdot\ell^{(t)}_{i,j}\cdot y^{(t)}_{i,j}$
	\STATE Updates:
	\begin{align}
		v^{(t+1)}_i & \leftarrow v^{(t)}_{i}\cdot \exp\big(-\eta \sum_{j=1}^N\ell^{(t)}_{i,j}\cdot y^{(t)}_{i,j}\big)
		\\
		w^{(t+1)}_{i,j} & \leftarrow w^{(t)}_{i,j}\cdot \exp\big(-\rho\cdot \ell^{(t)}_{i,j}\big)
	\end{align}
   	\ENDFOR
   	\end{algorithmic}
\end{algorithm}

We have the following lemma:

\begin{lem}\label{lem:meta-hedge}
	Introduce compound loss vector $\tilde{\loss}$ with $\tilde{\ell}^{(t)}_i := \sum_j \ell^{(t)}_{i,j}\cdot y^{(t)}_{i,j}$. Then $\rho$ can be chosen in \texttt{HHedge} such that
	\begin{equation}
		\sum_{t=1}^T \langle \x^{(t)},\tilde{\loss}^{(t)}\rangle 
		\leq  \sum_{t=1}^T \tilde{\loss}^{(t)}_i +
		\leq O(\sqrt{T \log M})\quad\forall i.
	\end{equation}
	Moreover, $\rho$ and $\eta$ can be chosen such that, for all $i$,
	\begin{equation}
		\sum_{t,i,j=1}^{T,M,N} x^{(t)}_{i}\ell^{(t)}_{i,j} y^{(t)}_{i,j}
		\leq \sum_{t=1}^T\ell^{(t)}_{i,j}
		+ O(\sqrt{T \log M} + \sqrt{T \log N}).
	\end{equation}
\end{lem}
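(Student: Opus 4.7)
The plan is to reduce both inequalities to the classical \texttt{Hedge} regret bound and then compose the top-level bound with the bottom-level bound. The central observation I would lean on is that the loss the top-level instance actually incurs on round $t$ is exactly $\langle \x^{(t)},\tilde{\loss}^{(t)}\rangle$, and that $\tilde{\ell}_i^{(t)} = \sum_j \ell_{i,j}^{(t)} y_{i,j}^{(t)} \in [0,1]$, since it is a convex combination of losses in $[0,1]$. Thus $\tilde{\loss}^{(t)}$ is a legitimate loss vector for a standard $M$-expert Hedge instance, and the top-level update in \texttt{HHedge} is precisely the Hedge update on $\tilde{\loss}^{(t)}$ with rate $\eta$.

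For the first inequality, I would invoke the standard Hedge analysis directly on the top-level instance: choosing $\eta = \sqrt{(\ln M)/T}$ yields
\begin{equation*}
	\sum_{t=1}^T \langle \x^{(t)},\tilde{\loss}^{(t)}\rangle - \min_i \sum_{t=1}^T \tilde{\ell}_i^{(t)} \leq O(\sqrt{T\log M}),
\end{equation*}
which gives the claim for any fixed $i$.

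For the second inequality, I would first rewrite the LHS as $\sum_t \langle \x^{(t)},\tilde{\loss}^{(t)}\rangle$ and apply the first inequality to bound it by $\sum_t \tilde{\ell}_i^{(t)} + O(\sqrt{T\log M})$. Next, observe that $\sum_t \tilde{\ell}_i^{(t)} = \sum_{t,j} y_{i,j}^{(t)}\ell_{i,j}^{(t)}$ is exactly the cumulative loss of the $i$-th bottom-level Hedge instance, which is itself a standard Hedge on $N$ experts with losses in $[0,1]$ and rate $\rho$. Setting $\rho = \sqrt{(\ln N)/T}$, the classical bound gives $\sum_t \tilde{\ell}_i^{(t)} \leq \sum_t \ell_{i,j}^{(t)} + O(\sqrt{T\log N})$ for every fixed $j$. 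Chaining yields the claim.

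There is no real obstacle here; the only thing to be careful about is verifying that $\tilde{\loss}^{(t)}$ lies in $[0,1]^M$ (so no range-rescaling factor appears) and that the two learning rates can be tuned independently — which they can, because the top-level update depends on the bottom-level plays only through the scalar $\tilde{\ell}_i^{(t)}$, not through $\rho$ explicitly. A minor bookkeeping point is that one needs both $\eta$ and $\rho$ to be calibrated with knowledge of $T$ (or to use a standard doubling trick), but this is inherited directly from vanilla Hedge.
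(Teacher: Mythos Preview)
Your proposal is correct and takes exactly the same approach as the paper, whose entire proof reads ``Apply regret bounds for \texttt{Hedge} twice.'' You have simply spelled out that two-line argument in detail (noting that $\tilde{\loss}^{(t)}\in[0,1]^M$ so the top level is a bona fide \texttt{Hedge} instance, and then chaining with the bottom-level bound), which is precisely what the paper intends.
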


\begin{proof}
	Apply regret bounds for \texttt{Hedge} twice.
\end{proof}

Lemma~\ref{lem:meta-hedge} says, firstly, that \texttt{HHedge} has bounded regret relative to the bottom-level instantiations and, secondly, that it has bounded regret relative to any of the $M\times N$ experts on the bottom-level.

Algorithm~\ref{alg:meta-exp2} modifies \texttt{HHedge} so that it is suitable for bandit feedback, yielding \texttt{HEXP3}. A corresponding no-regret bound follows immediately:

\begin{lem}\label{lem:meta-exp}
	Define $\tilde{\loss}$ as in Lemma~\ref{lem:meta-hedge}. Then $\rho$ can be chosen in \texttt{HEXP3} such that
	\begin{equation}
		\expec\left[\sum_{t=1}^T\ell_{x^{(t)},y^{(t)}}\right]
		\leq \sum_{t=1}^T \tilde{\ell}^{(t)}_{i}
		+ O(\sqrt{MT\log M})
	\end{equation}
	Moreover, $\rho$ and $\eta$ can be chosen such that
	\begin{equation}
		\expec\left[\sum_{t=1}^{T} \ell^{(t)}_{x^{(t)},y^{(t)}}\right]
		\leq \sum_{t=1}^T\ell^{(t)}_{i,j}
		+ O(\sqrt{TM \log M} + \sqrt{T N\log N})
	\end{equation}
\end{lem}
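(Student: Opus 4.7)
The plan is to apply the standard \texttt{EXP3} regret bound twice, once at each level of the hierarchy, mirroring the proof of Lemma~\ref{lem:meta-hedge} but with importance-weighted loss estimators replacing the full-information losses used by \texttt{Hedge}. The structural argument is exactly the two-stage decomposition of Lemma~\ref{lem:meta-hedge}; what changes is only the form of the per-step regret from each level.

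First I would view the top-level bandit in \texttt{HEXP3} as an \texttt{EXP3} instance over $M$ arms, where arm $i$ on round $t$ pays the compound loss $\tilde{\ell}^{(t)}_i = \sum_j \ell^{(t)}_{i,j}\, y^{(t)}_{i,j}$. The algorithm only observes $\ell^{(t)}_{i^{(t)},j^{(t)}}$, from which one constructs an unbiased estimator of $\tilde\ell^{(t)}_i$ via importance weighting by $x^{(t)}_i$. Since $\tilde{\ell}^{(t)}_i\in[0,1]$ (a convex combination of $[0,1]$-losses), the standard \texttt{EXP3} analysis applies verbatim, and tuning $\rho = \Theta(\sqrt{\log M / (TM)})$ gives, for every fixed $i\in[M]$,
\begin{equation}
\expec\left[\sum_{t=1}^T \ell^{(t)}_{x^{(t)},y^{(t)}}\right] = \expec\left[\sum_{t=1}^T \sum_{i} x^{(t)}_i \tilde{\ell}^{(t)}_i\right] \leq \sum_{t=1}^T \tilde{\ell}^{(t)}_i + O\!\left(\sqrt{MT\log M}\right),
\end{equation}
which is the first bound.

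For the second bound, I would then apply the standard \texttt{EXP3} regret guarantee to the $i$-th bottom-level bandit, considered as an \texttt{EXP3} instance over $N$ arms with losses $\ell^{(t)}_{i,\cdot}$. Since each $\cB_i$ receives an update on every round (by construction of \texttt{HierarchicalBandit}), tuning $\eta = \Theta(\sqrt{\log N /(TN)})$ yields
\begin{equation}
\sum_{t=1}^T \tilde{\ell}^{(t)}_i = \sum_{t=1}^T \sum_j \ell^{(t)}_{i,j}\, y^{(t)}_{i,j} \leq \sum_{t=1}^T \ell^{(t)}_{i,j} + O\!\left(\sqrt{TN\log N}\right)
\end{equation}
for every fixed $j\in[N]$. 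Chaining this into the previous inequality gives the second claim.

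The main obstacle is largely bookkeeping: one must confirm that the estimated compound losses at the top level remain bounded in the range required for \texttt{EXP3}, which is why the importance-weighted estimator is paired with a uniform exploration term of order $\gamma$ in the standard \texttt{EXP3} template; this is what forces the regret to scale as $\sqrt{MT\log M}$ rather than $\sqrt{T\log M}$. A secondary point worth checking is that the bottom-level losses $\ell^{(t)}_{i,j}$ are adversarially-admissible, i.e.\ may depend on the history but not on the current draw of the top-level bandit; this is the case because the protocols \chosen, \actual, \comply\ all assign losses as measurable functions of the environment's response and the arm actually pulled. Given these two observations, the result is simply two applications of the \texttt{EXP3} bound.
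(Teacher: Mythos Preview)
Your proposal is correct and matches the paper's approach exactly: the paper's entire proof is the single line ``Follows from Lemma~\ref{lem:meta-hedge} and bounds for \texttt{EXP3}'', and you have faithfully unpacked this as two nested applications of the \texttt{EXP3} regret bound, one per level.

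Two minor slips worth fixing. First, you have the roles of $\eta$ and $\rho$ reversed relative to Algorithms~\ref{alg:meta-hedge} and~\ref{alg:meta-exp2}: $\eta$ is the top-level rate and $\rho$ the bottom-level one. Second, your justification for the bottom-level bound (``each $\cB_i$ receives an update on every round by construction of \texttt{HierarchicalBandit}'') appeals to the wrong algorithm; the lemma is about \texttt{HEXP3}, where $w_{i,j}$ is updated only when the pair $(i,j)$ is actually drawn. The reason the bottom-level \texttt{EXP3} analysis still goes through is the importance weight $1/(x_i y_{i,j})$, which keeps the loss estimator for $\ell^{(t)}_{i,j}$ unbiased even on rounds where bandit $i$ is not selected---not per-round feedback.
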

\begin{proof}
	Follows from Lemma~\ref{lem:meta-hedge} and bounds for \texttt{EXP3}.
\end{proof}

\begin{algorithm}[tb]
   \caption{\texttt{Hierarchical EXP3 (HEXP3)}}
   \label{alg:meta-exp2}
   \begin{algorithmic}   
   \STATE {\bfseries Input:} $\eta,\rho>0$\\
   	 $v^{(1)}_{i}=1$ for $i\in[M]$;\\ 
   	 $w^{(1)}_{i,j}=1$ for $(i,j)\in[M]\times[N]$
	\FOR{$t=1$ {\bfseries to} $T$}
	\STATE Set $\x^{(t)} \leftarrow \vt^{(t)}/X^{(t)}$ where $X^{(t)} = \sum_{i=1}^M v^{(t)}_{i}$.
	\STATE Set $\y^{(t)}_{i} \leftarrow \wt^{(t)}_{i}/Y^{(t)}_{i}$ where $Y^{(t)}_{i} = \sum_{j=1}^N w^{(t)}_{i,j}$.
	\STATE Draw $x^{(t)}\sim \x^{(t)}$ and $y^{(t)}\sim \y^{(t)}_{x^{(t)}}$.
	\STATE Incur loss $\ell^{(t)}_{x^{(t)},y^{(t)}}\in [0,1]$ 
	\STATE Updates:
	\begin{align}
		v^{(t+1)}_i & \leftarrow \begin{cases}
			v^{(t)}_{i}\cdot 
			\exp\big(-\eta\frac{\ell^{(t)}_{i,j}}{x_i}\big) & i=x^{(t)} \\
			v^{(t)}_{i} & \text{else}
		\end{cases}		 
		\\
		w^{(t+1)}_{i,j} & \leftarrow \begin{cases}
			w^{(t)}_{i,j}\cdot \exp\big(-\rho\frac{\ell^{(t)}_{i,j}}{x_iy_{i,j}}\big) 
			& \text{if }(i,j)=(x^{(t)}, y^{(t)}) \\
			w^{(t)}_{i,j} &\text{else}
		\end{cases}
	\end{align}
   	\ENDFOR
   	\end{algorithmic}
\end{algorithm}

\paragraph{Hierarchical Bandit with Thompson sampler base.}
Algorithm~\ref{alg:bts} (\texttt{BTS}) shows how to modify the Thompson sampler for use as a bottom-level algorithm in \texttt{HierarchicalBandit}. The modification applies the importance weighting trick: replace $1$ in Thompson sampling with $\tilde{1}=1/p$, where $p$ is the probability that the top-level bandit calls \texttt{BTS} on the given round.

\begin{algorithm}[tb]
   \caption{\texttt{Base Thompson Sampler (BTS)}}
   \label{alg:bts}
   \begin{algorithmic}
   	\STATE {\bfseries Input:} Probability $p$ that \texttt{BTS} is called by top-bandit\\
   	\STATE Set $\tilde{1}\leftarrow 1/p$
   	 \STATE For each arm $i$ sample $\theta_i\sim\beta(S_i +\tilde{1},F_i +\tilde{1})$
	\STATE Play arm $i^{(t)} := \argmax_i \theta_i$ and observe reward $r^{(t)}$
	\STATE Sample $b$ from Bernoulli with success probability $r^{(t)}$
	\STATE If $b=1$ then $S_i \leftarrow S_i + \tilde{1}$ else $F_i\leftarrow F_i+\tilde{1}$
   	\end{algorithmic}
\end{algorithm}

\end{document}